\newtheorem{definition}{Definition}
\newtheorem{theorem}{Theorem}
\newtheorem{proposition}[theorem]{Proposition}
\theoremstyle{nonumberplain}
\newtheorem{proof}{Proof}
\theoremstyle{plain}
\DeclareMathOperator*{\argmax}{arg\,max}
\newcommand{\Eta}{\mathrm{H}}
\def\NAT@spacechar{~}
\def\@@number#1{#1}
\Crefname{observation}{Observation}{Observations}
\Crefname{assumption}{Assumption}{Assumptions}
\newcommand{\thesistitle}{\textbf{Hierarchical Time Series Forecasting with Bayesian Modeling}}
\newcommand{\thesisauthorname}{\textbf{Gal Elgavish}}
\newcommand{\thesissupervisername}{\textbf{Prof. Eyal Shimony and Dr. David Tolpin}}
\newcommand{\thesismonth}{\textbf{August}}
\newcommand{\thesisyear}{\textbf{2023}}
\begin{document}

\begin{titlepage}
    \begin{center}
        \vspace*{1cm}
        
        \includegraphics[width=0.1\textwidth]{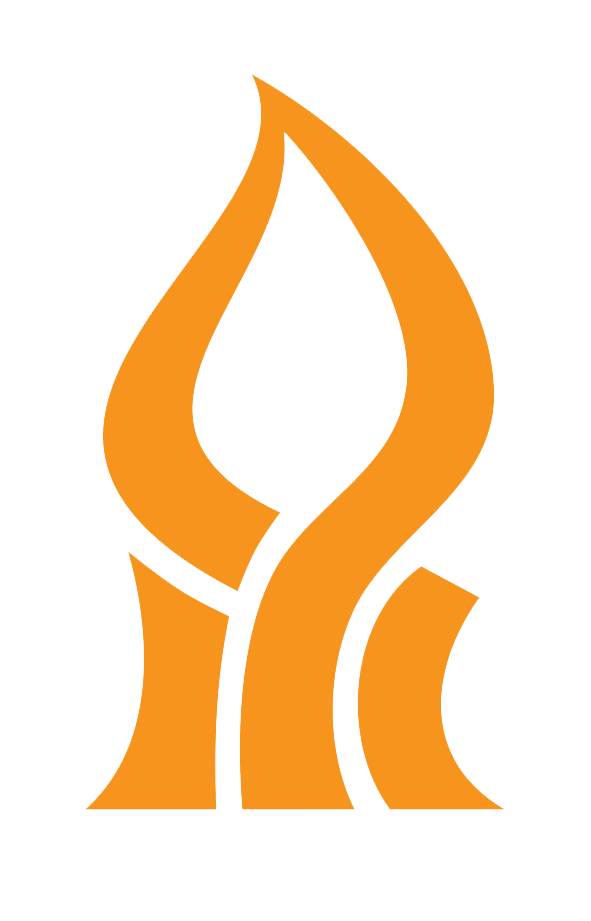}\\
        Ben-Gurion University of the Negev\\
        The Faculty of Natural Sciences\\
        The Department of Computer Science
        
        \vspace{2cm}
        
        {\Large \thesistitle}
        
        \vspace{1cm}
        
        Thesis submitted in partial fulfillment of the requirements\\for the Master of Sciences degree
        
        \vspace{1cm}
        
        \thesisauthorname
        
        \vspace{1cm}
        
        Under the supervision of \thesissupervisername
        
        \vfill
        
        \thesismonth \quad \thesisyear

    \end{center}
\end{titlepage}

\frontmatter

\thispagestyle{plain}

\begin{center}
    \Large
    \thesistitle

    \vspace{0.4cm}
    \large
    \thesisauthorname
       
    \vspace{0.4cm}
    \large
    Master of Sciences Thesis 
      
    \vspace{0.2cm}
    \large
    Ben-Gurion University of the Negev 
    
    \vspace{0.2cm}
    \large
    \thesisyear
    
    \vspace{0.9cm}
    \Large
    \textbf{Abstract}
\end{center}

We encounter time series data in many domains such as finance, physics, business, and weather.
One of the main tasks of time series analysis, one that helps to take informed decisions under uncertainty, is forecasting.
Time series are often hierarchically structured, e.g., a company sales might be broken down into different regions, and each region into different stores.
In some cases the number of series in the hierarchy is too big to fit in a single model to produce forecasts in relevant time, and a decentralized approach is beneficial.

One way to do this is to train independent forecasting models for each series and for some summary statistics series implied by the hierarchy (e.g. the sum of all series) and to pass those models to a \textit{reconciliation} algorithm to improve
those forecasts by sharing information between
the series.

In this work we focus on the \textit{reconciliation} step, and propose a method to do so from a Bayesian perspective --- \textit{Bayesian forecast reconciliation}.
We also define the common case of \textit{linear Gaussian reconciliation}, where the forecasts are Gaussian and the hierarchy has linear structure, and show that we can compute \textit{reconciliation} in closed form.
We evaluate these methods on synthetic and real data sets, and compare them to other work in this field.

\chapter*{Acknowledgements}

I would like to thank Dr. David Tolpin who not only supervised me in this research with great patience and attention, but also taught me how to learn new stuff and how to approach problems.

I will also like to thank Prof. Eyal Shimony who made this research possible by giving great advices, comments and challenged me to improve my work.

\tableofcontents
\clearpage

\listoffigures
\clearpage

\listoftables

\mainmatter

\chapter{Introduction}
\label{ch:intro}

\section*{}

Time series data appears in many domains, such as: finance, physics, business, weather, and basically every field that has temporal measurements.
In many of these fields decisions must be taken today based on what will happen in the future, which makes \textit{Time Series Forecasting} an important task in helping stakeholders to make informed decisions under uncertainty.

Different time series forecasting problems might have different characteristics:
\begin{itemize}
    \item Dimensionality of the time series are either univariate or multivariate.
    \item Time stamps may be regularly or irregularly sampled.
    \item The forecast horizon may be either short term (\textit{nowcasting}~\cite{wiki:nowcasting}) or long term.
    \item Time series might have seasonality, trend, auto-correlation as well as other features.
    \item Forecasts might be point-forecasts or probabilistic forecasts.
\end{itemize}

Methods for time series forecasting treat the series as a stochastic process.
Classical methods of forecasting include auto-regressive integrated moving average (\textit{ARIMA}) models~\cite{box-jenkins} and \textit{Kalman filtering}~\cite{kalman-filter}, while more modern methods include \textit{Gaussian processes}~\cite{gp-for-ml, duvenaud-thesis-2014} and \textit{deep neural networks}~\cite{DeepAR, gp-var}.
We will elaborate on some techniques in Chapter~\ref{ch:background}.

\section*{}

There are two types of multivariate time series: heterogeneous and homogeneous.
An example of a heterogeneous time series is, for example, medical monitoring --- blood pressure, heart rate, saturation etc. The time series are related but they are of different natures, and the number of components is small.
Examples of a homogeneous time series are stock prices, exchange rates, precipitation levels by area and so on. All of the components have the same `nature', and the number of components can be very large.
While for relatively low-dimensional heterogeneous (and homogeneous) time series, methods like \textit{Gaussian processes} and \textit{deep neural networks} can be used, for very high dimensional homogeneous time series we often cannot predict all of the time series together, and for reasons of efficiency and resource distribution we want to predict each component separately and then `consolidate' the predictions.
One approach to that is \textit{hierarchical time series forecasting}.
Some examples of hierarchical time series and their unique characteristics are:

\paragraph*{Demand Forecasting}
\label{par:intoduction:demand_forecasting}
Trying to predict customer demand in order to optimize supply chains.
The forecasts help answering business questions such as: production planning, inventory management, entering a new market.

Characteristics: series comprised of non-negative integers (number of units for each product); forecast horizon might be both for short and long term; time series might have seasonality; series might be irregularly sampled, i.e., not have historical data for the entire period, for example, in a grocery store not all fruits are being sold throughout the year.

The hierarchy of these forecasts can be defined by different possible groupings: (a) the different regions the products are sold; (b) the different customer segments the products are sold to; (c) the different categories each product is associated with.

\paragraph*{Stock Prices}
\label{par:intoduction:stock_prices}
Trying to predict stock prices in order to decide on a trading strategy that will optimize returns.

Characteristics: series are continuous (stock price); forecast horizon might be both for short and long term; time series usually have seasonality and trend; prices can be affected by psychology, i.e. forecasts themselves might change investors' behavior.

The hierarchy of these forecasts can be defined by the different indices each stock is associated with (S\&P500, FTSE100, etc.). A stock index (see~\cite{wiki:stock-index}) is a financial instrument that measures a stock market by grouping multiple stock prices into one number. The aggregation of the different prices into one number may be different between indices, e.g., it can be a simple average or it can be weighted average on the different market cap (total value) of each stock.

\paragraph*{Electricity Demand}
\label{par:intoduction:electricity_demand}
In order to be efficient in managing a power grid, it is vital to be able to forecast electricity demand. The increasing adoption of smart meters provides more data points at the customer level that can increase forecasts accuracy.

Characteristics: series are continuous; forecast horizon might be both for short and long term; time series usually have daily, weekly, and yearly seasonality.

The hierarchy of these forecasts can be defined by geographical regions and customer segments (private, business, etc.). The hierarchy usually has many, could be even millions, of series at the bottom level which might affect computational complexity.

\section*{}

When trying to forecast a hierarchy of many time series simultaneously, and to update forecasts in real time with every new data point that arrives, an issue of computational complexity arises.
In this case, a single forecasting model on a single machine may take too long to train and produce forecasts, and one might choose to use a technique such as \textit{Federated Learning}~\cite{federated-learning} which takes a decentralized approach, i.e., distribute the training to multiple machines, each train on different series independently and then pass the trained models to the central machine to produce forecasts.
Producing forecasts for each series independently is of course not the best solution since the information, imposed by the hierarchy and the relation between the series, is not taken into account.

We can see that, for example, in the \textit{electricity demand} use case, where we have both spatial and semantic hierarchical relation between time series (users in the grid), sharing information between time series should improve individual forecasts. Training that many series (might be millions) in a single-model-single-machine is not tractable in relevant time, and a distributed approach is needed. In the case of distributed approach, an algorithm for sharing information between the forecasts would be beneficial.

The problem we will tackle in this thesis is the following: we are given independent forecasts for each time series in a hierarchy that were (probably) produced by distributed machines; we need to develop a method to improve those forecasts (in a post-processing step) by sharing information between the series using knowledge imposed by the hierarchy.

This problem is an active field of research and, very often, this post-processing method of improving forecasts is referred to as \textit{reconciliation}.
In many works (such as \cite{recon-lin3-MinT}) the goal of this \textit{reconciliation} is to assure coherency of the forecasts, i.e., the forecasts should obey the hierarchical structure. For example, in the \textit{demand forecasting} use case, the forecasts for all stores in a region should add up to the forecast of the aggregated time series of the region.
As stated above, the problem we are trying to solve is related but different. We are trying to improve each of the individual forecasts as opposed to assuring coherency.

\section*{}

The next Chapter~\ref{ch:background} reviews the relevant background needed for the scope of this work. Chapter~\ref{ch:problem_statement} formally defines the problem of hierarchical time series forecasting. Chapter~\ref{ch:theory} presents the new theory we developed to deal with the problem. Chapter~\ref{ch:empirical_evaluation} presents empirical results from experiments we conducted.

\chapter{Background}
\label{ch:background}

\section{Time Series Forecasting}
\label{sec:background:time_series_forecasting}

Time series is any data that is indexed by time, it may be univariate or multivariate. The main tasks of time series analysis are \textit{smoothing}, \textit{filtering} and \textit{forecasting}. \textit{Smoothing} is the task of approximating a state in time, based on all other past and future states. \textit{Filtering} is the task of ``predicting the present'', as in estimating the current not-noisy state, based on all previous and current state. \textit{Forecasting} is predicting the future, based on the past.

Forecasting can be divided into two types: point forecasting and probabilistic or uncertainty forecasting. \textit{Point forecasts} result in a single value that might be univariate or multivariate, while \textit{probabilistic forecasts} result in a probability distribution that quantifies uncertainty.

An overview of methods for time series forecasting is provided below.

\subsection{ARIMA}
\label{subsec:background:forecasting:arima}

Autoregressive integrated moving average (ARIMA) is a method for analyzing time series.
It consists of three different models: autoregressive (AR), differencing (integration) (I), and moving average (MA).
ARIMA model is parameterized by the order term of each of its ``sub'' models, i.e., ARIMA($p, d, q$) is the equivalent of combining: AR($p$), I($d$) and MA($q$).
\cite{box-jenkins} introduced a method for modeling time series with ARIMA models.
Details of these models appear below:

\subsubsection{AR}
    The autoregressive model of order $p$ (AR($p$)) is defined as
    \begin{equation*}
        y_t = \mu + \sum_{i=1}^p \phi_i y_{t-i} + \epsilon_t
    \end{equation*}
    where $\mu$ is a constant term, $\{\phi_i\}_1^p$ are the autoregressive parameters and $\epsilon_t$ is a white noise term for state $t$.
\subsubsection{I}
    Differencing is a transformation applied to a non-stationary time series in order to make it stationary in the mean sense, i.e.: \begin{equation*}
        mean_X(t) = mean_X(t+\tau) \quad \forall \tau \in \mathbb{R}
    \end{equation*}
    For example, an I($1$) differencing is
    \begin{equation*} y'_t = y_t -y_{t-1} \end{equation*}
    and an I($2$) differencing is
    \begin{equation*} y'_t = y_t -2y_{t-1} +y_{t-2} \end{equation*}.
\subsubsection{MA}
    The moving average model of order $q$ (MA($q$)) is defined as
    \begin{equation*} y_t = \mu + \sum_{i=1}^q \theta_i \epsilon_{t-i} + \epsilon_t \end{equation*}
    where $\mu$ is the mean of the series, $\{\theta_i\}_1^q$ are the moving average parameters and $\epsilon_t$ is a white noise term for state $t$.
\subsubsection{Full Model}
    The full ARIMA($p,d,q$) is given by combining all models together, so, for example, an ARIMA($1,1,2$) is given by:
    \begin{equation*} y_t = \mu + y_{t-1} + \phi_1 (y_{t-1} - y_{t-2}) + \theta_1 \epsilon_{t-1} + \theta_2 \epsilon_{t-2} + \epsilon_t \end{equation*}
    Parameters $\{\phi_i\}_1^p, \{\theta_i\}_1^q, \mu$ are estimated by maximum likelihood estimation (MLE). Parameters $p,d,q$ are usually optimized by comparing an information criterion such as Akaike information criterion (AIC) or Bayesian information criterion (BIC).
\subsubsection*{}
ARIMA models try to deal with non-stationary time series by the differencing step, but some trends cannot be removed with differencing. Another approach to deal with non-stationary time series is to use state-space models (Section \ref{subsec:background:forecasting:state-space}).

\subsection{State-Space Models}
\label{subsec:background:forecasting:state-space}

State-space models generally attempt to describe a system by assuming that there is a latent state with time-varying dynamic relationship. The characteristics of the system is that:
\begin{itemize}
    \item It has a Markov property, i.e., the state of the system in time $t$ is enough to infer the state in future time $t+1$.
    \item We cannot observe the latent state itself other than through a noisy observation.
\end{itemize}
This leads to two main equations describing a state-space model --- the \textit{transition equation} which describes how the latent state $\alpha_t$ evolves through time:
\begin{equation*}
    \alpha_{t+1} = f_t( \alpha_t)
\end{equation*}
and the \textit{observation equation} which relates between observed data $y_t$ and latent state:
\begin{equation*}
    y_t = g_t (\alpha_t) + \epsilon_t \quad\quad \epsilon_t \sim \mathcal{D}_t
\end{equation*}

As with ARIMA (\ref{subsec:background:forecasting:arima}) models, parameters of state space models can be estimated with MLE, but can also be estimated with Bayesian inference (see Section~\ref{subsec:background:bayesian_graphical_models:bayesian_ml}).

\subsection{Gaussian Process Regression}
\label{subsec:background:forecasting:gpr}

Gaussian processes (GPs) are models suitable for both regression and classification tasks. Since time series forecasting is a regression task, i.e., we are predicting values based on time-stamped input, GPs are a great tool for time series forecasting. Since this is the subject of this thesis, we continue the background and intuition on regression problems.  One can get more information on GPs for classification (and for regression too) in~\cite{gp-for-ml}.

In regression problems we are given a set of (possibly noisy) observations:
\begin{equation*}
    \mathcal{D} = \{\pmb{x}_t \in \mathbb{R}^D, y_t\}_{t=1}^T
\end{equation*}
or in matrix form:
\begin{equation*}
    X \in \mathbb{R}^{T \times D}, \pmb{y} \in \mathbb{R}^T
\end{equation*}
where we assume the observation model is:
\begin{equation*}
    y_t = f(\pmb{x}_t) + \epsilon_t, \quad \epsilon_t \sim \mathcal{N}(0, \sigma^2)
\end{equation*}
$\epsilon_t$ is random noise i.i.d. normally distributed with variance $\sigma^2$.
We wish to infer the function at new positions $X_*$ which we denote as $\pmb{f}_* \in \mathbb{R}^T$, or in symbols:
\begin{equation*}
    \pmb{f}_* \mid \pmb{y};X,X_*
\end{equation*}

Remarks:
\begin{itemize}
    \item $y_t$ is scalar but of course $\pmb{y}_t$ can be multivariate, in which case we should use what is called \textit{Multi-task Gaussian process} (MTGP). For more information about MTGP see~\cite{bonilla-multi-task-gps}.
    \item $\epsilon$ is Gaussian distributed according to the assumption of a GP. It is possible to have a different noise model, not examined in this work.
    \item we define $\epsilon_t$ as homoscedastic noise but it is possible to define a model with heteroscedastic noise, i.e., $\epsilon_t$ is parameterized by $\sigma_t$ that can change over time. This is not crucial for the understanding of this work and will make derivations more complicated so we stick with homoscedastic noise.
\end{itemize}

Returning to GPs, a GP is a collection of random variables, any finite number of which have a joint Gaussian distribution.
A GP is fully defined by its mean function $m(\pmb{x})$ and covariance function (kernel function) $K(\pmb{x}, \pmb{x}')$:
\begin{align*}
    m(\pmb{x}) &= \mathbb{E}[f(\pmb{x})]\\
    K(\pmb{x}, \pmb{x}') &= \mathbb{E}[(f(\pmb{x}) - m(\pmb{x}))(f(\pmb{x}') - m(\pmb{x}'))]
\end{align*}
and is written:
\begin{equation*}
    f(\pmb{x}) \sim \mathcal{GP}(m(\pmb{x}), K(\pmb{x}, \pmb{x}'))
\end{equation*}
i.e., the function $f$ is GP distributed with mean function $m$ and covariance function $K$.
With this explanation a GP can also be thought of as a distribution over functions.
Since we are dealing with normal distributions it must hold that the matrix $K$ which elements are given by: $K_{i,j}=K(\pmb{x}_i, \pmb{x}_j)$ is a symmetric and positive semi-definite matrix, and thus, the kernel $K(\pmb{x}, \pmb{x}')$ must be a symmetric and positive semi-definite kernel.
 
Returning to the regression problem described here, the full GP model relevant to the problem is defined as:
\begin{equation*}
    \begin{bmatrix}
        \pmb{y}\\
        \pmb{f}_*
    \end{bmatrix} \sim \mathcal{N}\left(\begin{bmatrix} m(X) \\ m(X_*) \end{bmatrix}, \begin{bmatrix} K(X, X) + \sigma^2I & K(X, X_*) \\ K(X_*, X) & K(X_*, X_*) \end{bmatrix}\right)
\end{equation*}
i.e. $\pmb{y}$ and $\pmb{f}_*$ are jointly Gaussian distributed. This is of course after we defined functions $m$ and $K$, which will be examined in Section~\ref{subsubsec:background:forecasting:gpr:kernels}. We can then use the rules of conditioning in a multivariate normal distribution to get:
\begin{align*}
    \pmb{f}_* \mid \pmb{y};X,X_* \sim \mathcal{N}(&m(X_*) + K(X_*, X) [K(X, X) + \sigma^2I]^{-1} (\pmb{y} - m(X)),\\
    &K(X_*, X_*) - K(X_*, X) [K(X, X) + \sigma^2I]^{-1} K(X, X_*))
\end{align*}
Inference in GPs is very simple and requires only linear algebra operations. The downside is that the operation involves inverting a $T \times T$ matrix, where $T$ is the number of observations, which results in $\mathcal{O}(T^3)$ complexity.

The kernel determines almost all the generalization properties of a GP model, and that is why many times the mean function is chosen to be a constant, i.e. $m(\pmb{x})=\pmb{c}$.

We now expand on different kernel functions and how to choose them.

\subsubsection{Kernels}
\label{subsubsec:background:forecasting:gpr:kernels}
A kernel function that is used as a covariance function can be chosen upon the specific problem, but it must be positive semi-definite and symmetric to be valid. I will elaborate on some popular kernels, how to combine kernels in general and on their parameters optimization, some of it is taken from \cite{duvenaud-thesis-2014}.

\paragraph{Standard Kernels}
Some popular kernels and possible derivations are:

\subparagraph{Radial Basis Function}
Radial basis function (RBF) is the default or ``go-to'' kernel in most implementations, and can be defined as
\begin{equation*}
\label{eqn:background:gpr:rbf_kernel}
    K_{RBF}(x,x') = \sigma^2 \exp\left({-\frac{(x-x')^2}{2\ell^2}}\right)
\end{equation*}
where $\ell$, the length scale, controls the length of the ``wiggles'' of the function, and $\sigma^2$ is a scale factor of the distance from the mean.

\subparagraph{Periodic}
A periodic kernel is good for functions that repeat themselves exactly. A periodic kernel can be defined as:
\begin{equation*}
\label{eqn:background:gpr:periodic_kernel}
    K_{Per}(x,x') = \sigma^2 \exp\left({-\frac{2 \sin{(\pi |x-x'| p)}^2}{\ell^2}}\right)
\end{equation*}
where $p$ determines the periodicity and $\ell, \sigma^2$ are the same as in RBF kernels.

\subparagraph{Linear}
The linear kernel is non stationary as opposed to RBF and periodic kernels, i.e., its values are not dependant of the distance between the inputs but on the inputs themselves. A linear kernel can be defined as:
\begin{equation*}
    K_{Lin}(x,x') = \sigma^2 x x'
\end{equation*}
here, again, $\sigma$ is the output scale.

\paragraph{Combining Kernels}
Since positive definiteness is closed under addition and multiplication, to create new kernels, one can combine kernels through addition or multiplication. Generally speaking, one can think of addition of kernels as an `or' operator and multiplying kernels as an `and' operator. For example to create a periodic function with increasing variance in time, one can multiply linear and periodic kernels to create a new kernel, i.e., $K(x, x') = K_{Per}(x,x') K_{Lin}(x,x')$ as defined above.

\paragraph{Optimization of Hyperparameters}
Choosing the right hyperparameters, like $\ell$ and $\sigma$ for the RBF kernel, may be difficult, and in reality practitioners will either optimize those hyperparameters over the set of observations $\mathcal{D}$ with, e.g., maximum likelihood estimation (MLE), or conduct Bayesian inference.

For example MLE in our regression problem would be:
\begin{equation*}
    \pmb{\theta}^{MLE} = \underset{\pmb{\theta}}{\operatorname{\argmax}} \;\mathcal{N}(\pmb{y}|X,\pmb{\theta})
\end{equation*}
where $\pmb{\theta}$ are the hyperparameters of the kernel function chosen, e.g., if the kernel is an RBF kernel and $m(X)=0$, then $\pmb{\theta} = \{\sigma, \ell \}$.

\section{Bayesian Graphical Models}
\label{sec:background:bayesian_graphical_models}

\subsection{Bayesian Machine Learning}
\label{subsec:background:bayesian_graphical_models:bayesian_ml}
Let us recall the Bayes' rule:
\begin{equation}
    P(A|B) = \frac{P(B|A) P(A)}{P(B)}
    \label{eqn:bayes-raw}
\end{equation}
In Bayesian modeling we usually use the following notations:
\begin{itemize}
    \item $y$ - the data or observations.
    \item $\theta$ - the latent (unobserved) parameters.
    \item $\Tilde{y}$ - the predictions of the model.
\end{itemize}
So, when replacing $(A, B)$ with $(\theta, y)$ in~\eqref{eqn:bayes-raw} respectively, we get:
\begin{equation}
    P(\theta|y) = \frac{P(y|\theta) P(\theta)}{P(y)}
    \label{eqn:bayes}
\end{equation}
These probabilities are referred to as:
\begin{equation*}
    \text{posterior} = \frac{\text{likelihood} \times \text{prior}}{\text{evidence}}
\end{equation*}
Note also that the definition of conditional probabilities allows us to write \eqref{eqn:bayes} as:
\begin{equation}
    P(\theta|y) = \frac{P(y|\theta) P(\theta)}{P(y)} = \frac{P(\theta, y)}{P(y)} \propto P(\theta, y)
    \label{eqn:bayes-extended}
\end{equation}
note that $P(y)$ is constant with respect to $\theta$ and only affects the \textit{posterior} as a normalizing constant.

Bayesian modeling for machine learning is done in three main steps:
\begin{enumerate}
     \item \label{bayes-ml-step-1} \textbf{Specifying the generative model}, which is the joint probability of all observable and unobservable quantities in a problem, i.e., $p(\theta, y)$. Can be thought of as the story of how the data came to be.
    \item \label{bayes-ml-step-2} \textbf{Conditioning on observed data} for two main missions: 1) to infer the parameters or getting the posterior distribution, i.e., $p(\theta | y)$, also called Bayesian updating; 2) to get the predictive distribution $p(\Tilde{y} | y)$.
    \item \label{bayes-ml-step-3} \textbf{Evaluating the fit of the model}: How well does the model fit the data? Are the conclusions reasonable? How sensitive are the results to the modeling assumptions?
\end{enumerate}
One can refer to~\cite{bda-gelman, mcelreath2020statistical} to read more on Bayesian data analysis and machine learning.

\subsection{Probabilistic Graphical Models}
\label{subsec:background:bayesian_graphical_models:probabilistic_graphical_models}

A probabilistic graphical model (PGM) is a model that defines a set of random variables in a graph and their joint probability distribution.
There are two types of PGMs: 1) Directed Acyclic Graphs (DAGs) or Bayesian Networks (BNs); 2) Markov Random Fields (MRFs) which are undirected graphs.
We will expand on BNs since they are what we will use in this thesis. More on MRFs (and BNs) can be found in~\cite{koller2009probabilistic}.

\subsubsection{Bayesian Networks}
A \textit{Bayesian network} is a DAG whose nodes represent some random variables, e.g., $(X_1, ..., X_N)$.
The random variables represented in the BN can either be discrete or continuous.
For each node $X_i$ we define the conditional probability distribution as $P(X_i|Pa(X_i))$, where $Pa(X_i)$ denotes the direct parents of $X_i$ (and $P(X_i)$ if $X_i$ has no parents). With that we can express the joint distribution of the variables by the chain rule of BNs:
\begin{equation*}
    P(X_1,...,X_N) = \prod_{i=1}^N P(X_i|Pa(X_i))
\end{equation*}
i.e., the assumption is that each $X_i$ is independent of all other nondescendents given its parents.

In the discrete case we would represent the conditional probability distribution of a specific node in the form of a conditional probability table. For example, if we have two Bernoulli random variables $(X_1, X_2)$, with the following graph:
\begin{center}
\begin{tikzpicture}[node distance={2cm}, thick, main/.style = {draw, circle}] 
    \node[main] (1) {$X_1$};
    \node[main] (2) [right of=1] {$X_2$};
    \draw[->] (1) -- (2);
\end{tikzpicture}
\end{center}
we would need to build these conditional probability tables:
\begin{center}
\begin{tabular}{|c|c|}
    \hline
    $X_1=0$ & $X_1=1$ \\
    \hline\hline
    $p$ & $1-p$ \\
    \hline
\end{tabular}
\quad\quad
\begin{tabular}{|c||c|c|}
    \hline
    & $X_2=0$ & $X_2=1$ \\
    \hline\hline
    $X_1=0$ & $q_0$ & $1-q_0$\\
    \hline
    $X_1=1$ & $q_1$ & $1-q_1$\\
    \hline
\end{tabular}
\end{center}
Where $(p, q_0, q_1)$ are some probabilities.

In the continuous case we would represent the conditional probability distribution of a specific node in the form of a conditional probability density function. For example, if we have two continuous random variables $(\theta, y)$, with the following graph:
\begin{center}
\begin{tikzpicture}[node distance={2cm}, thick, main/.style = {draw, circle}] 
    \node[main] (1) {$\theta$};
    \node[main] (2) [right of=1] {$y$};
    \draw[->] (1) -- (2);
\end{tikzpicture}
\end{center}
we would need to define $p(\theta)$ and $p(y|\theta)$, and this defines the generative model of the problem, i.e.:
\begin{equation*}
    p(\theta, y) = p(\theta) p(y|\theta)
\end{equation*}

\subsection{Soft Evidence}
\label{subsec:background:bayesian_statistics:soft_evidence}
Evidence is a term used for the occurrence of an event, e.g., $X=x$, i.e., the event where random variable $X$ takes the value $x$. This is called \textit{hard evidence}.
Sometimes we are not completely certain that the event has occurred, but we believe that the event has happened with some probability, i.e., we have moved from the prior belief of $p(X)$ to a new belief $p'(X)$. This is called \textit{soft evidence} and is a form of uncertain evidence.

The idea of how to incorporate this new knowledge is discussed in~\cite{jeffrey1990logic} and explained in relation to graphical models in~\cite{pearl-black-book}.

We will use the example in~\cite{jeffrey1990logic, pearl-black-book} to explain this: consider an agent that inspects the color $C$ of a piece of cloth by candle light. That cloth has three possible colors we will denote as $C_1, C_2, C_3$. Before the inspection the agent had prior probability $p(C)$ of the discrete distribution of the color, and after the inspection, he changed his beliefs to $p'(C)$. Of course he could not be sure of the color he saw since it was seen with the light of the candle.

Notice that a usual relation in a graphical model (Section \ref{subsec:background:bayesian_graphical_models:probabilistic_graphical_models}) will be that the color $C$ is a latent parameter that is being observed in the evidence $e$, but with \textit{soft evidence} we have the other way around where the evidence $e$ changed our belief in the latent parameter $C$, i.e., we have distribution $C | e$, rather than a $e | C$, in our generative model.

Back to our example, let us imagine that we have a new parameter $S$ that denotes the chance that this cloth will be sold and that $S$ depends only on the cloth's color, i.e., $S | C$.
A picture of this network is presented in Figure~\ref{fig:background:bayesian_statistics:soft_evidence:example}.
We would like to update our belief in $p(S)$ to $p'(S)$, i.e., the new belief after the \textit{soft evidence} $e$ --- that is done with \textit{Jeffrey's Rule} of updating:
\begin{equation*}
    p'(S) = \sum_i p(S | C_i) p'(C_i)
\end{equation*}
In that way we can incorporate the knowledge acquired by the evidence on our parameter $S$ which is not directly connected to the evidence.

\begin{figure}[h!]	
	\centering
	{\begin{tikzpicture}
    \node[latent] (C) {$C$};
    \node[latent,left=of C,yshift=-0.25cm] (e) {$e$};
    \node[latent,right=of C,yshift=-0.25cm] (S) {$S$};
    \edge {C} {S};
    \edge [-,decorate,decoration=snake] {C} {e};
\end{tikzpicture}}
	\caption{A network representing the graphical model described in the example in~\ref{subsec:background:bayesian_statistics:soft_evidence}. Undirected snake edge between $e$ and $C$ denotes soft evidence about $C$ upon observing $e$.}
	\label{fig:background:bayesian_statistics:soft_evidence:example}
\end{figure}
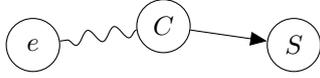

Another way to incorporate uncertain evidence in a graphical model is with \textit{virtual evidence}.
The basic idea is to add a virtual node $V$ in the graph that always takes a certain value (with probability 1), and is connected to the node of the parameter it convey evidence to (in our example it is node $C$). Then the likelihood $p(V=1 | e)$ needs to be defined and added that to the joint distribution of the graph.
This is not the method we use in this work, so we will not add more on this subject, but one can refer to~\cite{pearl-black-book} for more information.

\section{Other Related Work}
\label{sec:background:other_related_work}

\subsection{Forecast Reconciliation}
\label{sec:related_work:forecast_reconciliation}
\textit{Reconciliation} is an active field of research, and in other related works it is done in order to ensure \textit{coherency} of the different forecasts of the hierarchy, i.e. forecast for each level of the hierarchy should be equal to the aggregation of its descendants.

As stated in Chapter~\ref{ch:intro}, we would take a different approach of reconciling the individual forecasts by sharing information between the series in order to improve each individual forecast, rather then assuring coherency. This approach is well described in Chapter~\ref{ch:problem_statement}.
Nonetheless, we will note main works in this field.
Works can be divided to two main approaches: \textit{Linear Reconciliation} and \textit{Probabilistic Reconciliation}.

\subsubsection{Linear Reconciliation}
\cite{recon-lin1} introduced a least squares method, that was later improved by MinT \cite{recon-lin3-MinT}, which assumes the base forecasts are unbiased, while trying to produce the minimum variance unbiased revised forecasts. A different approach, ERM \cite{recon-lin4-unbiased}, does not assume unbiased forecasts, and tries to minimize the mean squared revised forecast errors. Other linear reconciliation works include \cite{recon-lin2, recon-lin5}.
    
\subsubsection{Probabilistic Reconciliation}
\cite{recon-prob1-copulas} propose using copulas and solving a LASSO problem to get a hierarchical coherent probabilistic forecasts.
\cite{recon-prob2-bayesian} propose a two level Bayesian approach, firstly learning the distribution of bottom level forecasts through estimation of the covariance matrix by historical errors, and then obtain aggregate consistent point forecasts, by defining a problem-specific loss function.
\cite{hier-bayes-paper} propose a fully Bayesian approach of reconciling the forecasts in order to make them coherent, and we will evaluate our method against theirs in Chapter~\ref{ch:empirical_evaluation}.

\chapter{Problem Statement}
\label{ch:problem_statement}

Let us define forecast reconciliation formally. In this work, we limit the discussion to reconciliation of scalar real-valued forecasts. Reconciliation of multiple multidimensional forecasts follows the same lines but involves a much more elaborate theoretical analysis, with limited contribution to understanding of the principles of Bayesian reconciliation.

\begin{definition}[Forecast Reconciliation]
\label{dfn:reconciliation}
Consider a system consisting of $N$ subsystems.  Assume that the parameter of interest of the $i$-th subsystem is $x_i \in \mathbb{R}$, so that the vector of parameters of interest of the whole system is $\pmb{x} \in \mathbb{R}^N$.  Forecasts at a new location or a certain time in the future are obtained 
    \begin{itemize}
        \item for each $x_i$
        \item for $u=f(\pmb{x})$ for some $f: \mathbb{R}^N \to \mathbb{R}^M$, $M \le N$
    \end{itemize}
in the form of distributions $\Theta_i$ with density $\theta_i$ for each $x_i$ and $\Eta$ with density $\eta$ for $u$. Then, \textbf{forecast reconciliation} consists of inferring $\pmb{x}|\Theta, \Eta$.
\end{definition}

Function $f$ can be interpreted as computing summary statistics of some form on $\pmb{x}$.
A popular form of $f$ is $A\pmb{x}$ where $A$ is a matrix of shape $M \times N$ with all elements equal to either 1 or 0, such that each element of $u$ is the sum of a subset of components of $\pmb{x}$. In this case, $A$ apparently reflects the grouping of subsystems such that all subsystems corresponding to entries of 1 in a row of $A$ belong to the same group. Groups may overlap.
However, other forms of $f$, both linear and general, are conceivable; for example, $f$ may compute the mean, the product, or the sum of squares of $\pmb{x}$.
It is worth noting that $x_i$ are forecast independently, but $u$ must be forecast as a whole; and that is why we set $\pmb{x}$, but not $u$, in bold. Intuitively, $u$ is a small set of summary statistics of $\pmb{x}$ that conveniently admits simultaneous forecasting.

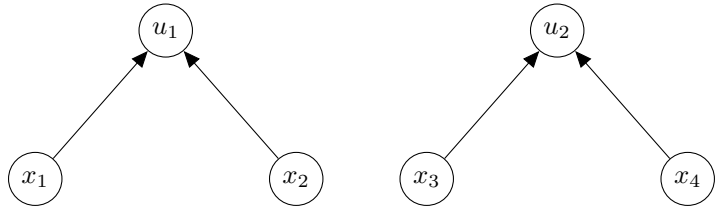
\begin{figure}[h!]	
	\centering
	{\begin{tikzpicture}
    \node[latent] (x_1) {$x_{1}$};
    \node[latent,right=of x_1,color=white] (x_15) {};
    \node[latent,right=of x_15] (x_2) {$x_{2}$};
    \node[latent,right=of x_2] (x_3) {$x_{3}$};
    \node[latent,right=of x_3,color=white] (x_35) {};
    \node[latent,right=of x_35] (x_4) {$x_{4}$};
    \node[latent,above=of x_15,yshift=0.25cm] (u_1) {$u_1$};
    \node[latent,above=of x_35,yshift=0.25cm] (u_2) {$u_2$};
    \edge {x_1} {u_1};
    \edge {x_2} {u_1};
    \edge {x_3} {u_2};
    \edge {x_4} {u_2};
\end{tikzpicture}}
	\caption{Example of a system with four subsystems belong to two groups.}
	\label{fig:problem_statement:example}
\end{figure}
Consider the following example of the system appearing in Figure~\ref{fig:problem_statement:example}. In this system we have $N=4$ subsystems in $M=2$ groups.
In this example $f$ can be any function such that $f: \mathbb{R}^4 \to \mathbb{R}^2$. Continuing the example, we define $f(\pmb{x})=A\pmb{x}$, and if the summary statistics represent a summation, then:
\begin{equation*}
    \pmb{x} = \begin{bmatrix}
        x_1 & x_2 & x_3 & x_4
    \end{bmatrix}^T
\end{equation*}
and 
\begin{equation*}
    A = \begin{bmatrix}
            1 & 1 & 0 & 0 \\
            0 & 0 & 1 & 1
    \end{bmatrix}
\end{equation*}
As stated in Definition~\ref{dfn:reconciliation}, we are also given forecasts for each subsystem and for $u$, and in our example  those forecasts could be normally distributed:
\begin{itemize}
    \item $\pmb{x}_i \sim \Theta_i \equiv \mathcal{N}(\mu_i, \sigma_i^2) \quad \forall i \in \{1, 2, 3, 4\}$
    \item $u \sim \Eta \equiv \mathcal{MVN}(\pmb{\mu}_u, \Sigma_u)$
\end{itemize}
and we would like to infer $\pmb{x}|\Theta, \Eta$.

Notice that although $M=2$, we have a single (multivariate normally distributed) forecast for $u$.
Connecting this example to the introduction in Chapter~\ref{ch:intro}, we are assuming that although it is intractable to forecast $\pmb{x}$ in a single model when $N$ is very large (of course in this toy example it is not the case), it is tractable to forecast $M$ when $M<<N$ in a single model, hence we have a single forecast distribution for $u$.

\chapter{Theoretical Analysis}
\label{ch:theory}
\section{Generative Model}
\label{sec:theory:model}
\begin{figure}	
	\centering
	\resizebox{0.5\linewidth}{!}{\begin{tikzpicture}
    \node [latent] (s) {$s$};
    \node [obs,left=of s,yshift=-0.5cm] (y) {$y$};
    \node [latent,right=of s,yshift=-0.5cm] (x) {$x$};
    \edge {s} {x};
    \edge {s} {y};
\end{tikzpicture}}
	\caption{A forecasting model. The latent state $s$ is observed through observations $y$. A forecasted parameter of interest $x$ depends on $s$.}
	\label{fig:forecast-model}
\end{figure}
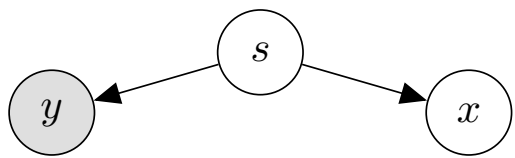

We approach the problem of forecast reconciliation using the methodology of Bayesian generative modelling (Section~\ref{sec:background:bayesian_graphical_models}). First, we describe the full model. Then, we reduce the full network to a subnetwork relevant for reconciliation of forecasts of $x_i$.
Before turning to forecast
reconciliation, let us describe in general terms the structure
of  a Bayesian network for forecasting.
\begin{definition}
A Bayesian forecasting model is a generative model (Figure~\ref{fig:forecast-model}):
\begin{equation}
	\begin{aligned}
		s & \sim D_s \\
		y|s & \sim D_y(s) \\
		x|s & \sim D_x(s)
	\end{aligned}
	\label{eqn:forecast-model}
\end{equation}
where 
\begin{itemize}
\item $s$ is a latent variable representing the state of
the system;
\item $y$ is the observation through which $s$ is
observed, i.e., historical values;
\item $x$ is the parameter of interest to be forecasted, i.e., future states;
\item $D_s$, $D_y(s)$, and $D_x(s)$ are distributions.
\end{itemize}
A forecast is the marginal distribution of $x$ given $y$.
\end{definition}

There are $N+1$ forecasting models involved in the reconciliation problem (Definition~\ref{dfn:reconciliation}): \begin{itemize}
\item one model for forecasting $u$;
\item $N$ models for forecasting each of $x_i$, $i \in 1 ... N$.
\end{itemize}
We assume that all forecasting models have access to the same observation $y$. Instead of the prior and conditional
distributions for each of the models, we are given the forecasts --- the posterior marginal distributions $\Theta_i$ of $x_i$ and
$\Eta$ of $u$.

Based on the forecast of $u$, we aim to update
the forecasts of $x_i$. To formalize this, we imagine yet
another forecasting model, for the whole system, for which
neither the prior or the conditional distributions, nor the
marginal posterior of $\pmb{x}$ given $y$ are given, but which is probabilistically related to the forecasting models for $x_i$ and $u$. The full model, combining the individual forecast models for $x_i$ and $u$ as well as the imagined forecasting model for $\pmb{x}$, is shown in Figure~\ref{fig:full-model}.

\begin{figure}
	\begin{subfigure}{0.45\linewidth}
		\centering
		\resizebox{0.9\linewidth}{!}{\begin{tikzpicture}
\node [latent,line width=1.44] (x) {$x$};
\node [latent,right=of x,yshift=-1cm,line width=1.44] (u) {$u$};
\node [latent,below=of x,yshift=-2.5cm] (xhat) {$\hat x_i$};
\node [latent,right=of xhat,yshift=+1cm] (uhat) {$\hat u$};
\node [latent,left=of x, xshift=-0.5cm] (s) {$s$};
\node [latent,left=of xhat,xshift=-0.5cm] (sx) {$s_i$};
\node [latent,left=of uhat,xshift=-0.5cm] (su) {$s_u$};
\node [obs,below=of s,line width=1.44] (y) {$y$};
\edge {s} {x};
\edge {sx} {xhat};
\edge {su} {uhat};
\edge {s} {y};
\edge {sx} {y};
\edge {su} {y};
\edge [line width=1.44] {x} {u};
\edge [dashed] {x} {xhat};
\edge [dashed] {u} {uhat};
\plate [] {xplate} {(sx) (xhat)} {$N$};
\end{tikzpicture}}
		\caption{full network}
		\label{fig:full-model}
	\end{subfigure}
	\begin{subfigure}{0.45\linewidth}
		\centering
		\resizebox{0.9\linewidth}{!}{\begin{tikzpicture}
\node [latent] (x) {$x$};
\node [latent,right=of x,yshift=-1cm] (u) {$u$};
\phantom{\node [latent,below=of x,yshift=-2.5cm] (xhat) {$\hat x_i$};}
\phantom{\node [latent,right=of xhat,yshift=+1cm] (uhat) {$\hat u$};}
\phantom{\node [latent,left=of x, xshift=-0.5cm] (s) {$s$};}
\phantom{\node [latent,left=of xhat,xshift=-0.5cm] (sx) {$s_i$};}
\phantom{\node [latent,left=of uhat,xshift=-0.5cm] (su) {$s_u$};}
\node [obs,below=of s] (y) {$y$};
\phantom{\edge {s} {x};}
\phantom{\edge {sx} {xhat};}
\phantom{\edge {su} {uhat};}
\phantom{\edge {s} {y};}
\phantom{\edge {sx} {y};}
\phantom{\edge {su} {y};}
\edge {x} {u};
\edge [-,decorate,decoration=snake] {u} {y};
\phantom{\edge [dashed] {x} {xhat};}
\phantom{\edge [dashed] {u} {uhat};}
\phantom{\plate [] {xplate} {(sx) (xhat)} {$N$};}
\end{tikzpicture}}
		\caption{relevant subnetwork}
		\label{fig:reduced-model}
	\end{subfigure}
	\caption{Bayesian network for forecast reconciliation. The undirected snake edge between $u$ and $y$ denotes soft evidence about $u$ upon observing $y$.}
 	\label{fig:model}
\end{figure}
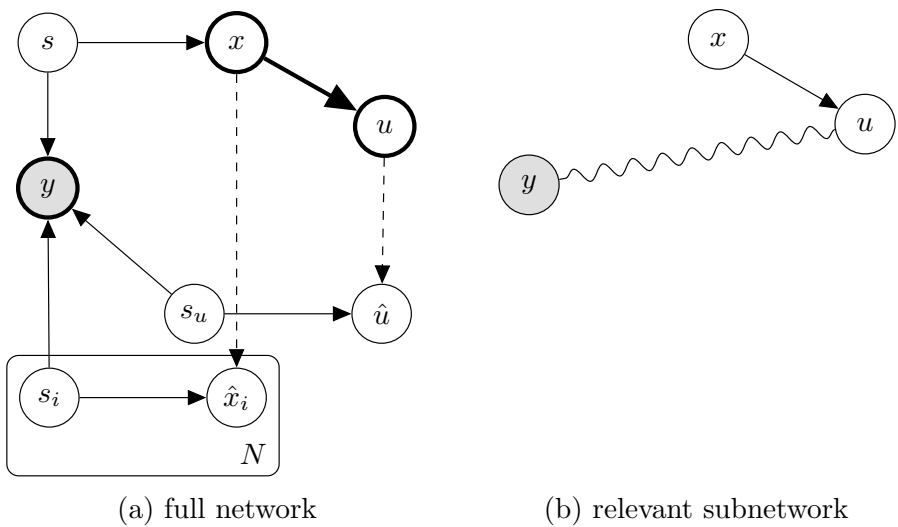

In the full model, 
\begin{itemize}
\item $s$ and $\pmb{x}$ are the latent state and the forecasted parameter of interest of the imagined forecasting model for the whole system;
\item $s_i$ and $\hat x_i$, $i \in 1 ... N$, are the latent state and the forecasted parameter of interest of the $i$-th subsystem;
\item $s_u$ and $\hat u$ are the latent state and the forecasted parameter of the summary $u=f(\pmb{x})$.
\end{itemize}
Observation $y$ is shared by all forecasting models.
Dashed edges denote probabilistic relationships between $\pmb{x}$ and $\hat x_i$ and $u=f(\pmb{x})$ and $\hat u$.
However, Definition~\ref{dfn:reconciliation} does not specify these relationships.
We assume that 
\begin{itemize}
\item the forecast $\hat u$ is accurate, that is $p(u|y) \equiv p(\hat u|y)$;
\item although $\pmb{x}|y$ cannot be constructed from $\hat x_i|y$ because interdependencies between $x_i$ are unknown, the product distribution of $\hat x_i$ is suitable as a prior for $\pmb{x}$.
\end{itemize}
Based on these assumptions, we reduce the full network in Figure~\ref{fig:full-model} to a subnetwork in Figure~\ref{fig:reduced-model} required for reconciliation, dropping parts of the full network with unknown prior or conditional probabilities. In particular, since neither the prior distributions of state variables $s$, $s_i$, $s_u$, nor the conditional distributions of observations or forecasts given the state variables are known, we omit them in the subnetwork in Figure~\ref{fig:reduced-model}.
The snake-shaped undirected link between $y$ and $u$ denotes that $y$ provides \textit{soft evidence}~\ref{subsec:background:bayesian_statistics:soft_evidence} about $u$, that is, the posterior distribution $u|y$, rather than the conditional distribution $y|u$, is given.

\begin{definition} A Bayesian forecast reconciliation model is a Bayesian generative model (Figure~\ref{fig:reduced-model}):
\begin{equation}
	\begin{aligned}
		\pmb{x} & \sim \Theta_1 \times \Theta_2 \times ...  \times \Theta_N \\
		u|\pmb{x} & \sim \delta(f(\pmb{x})) \\
		u|y & \sim \Eta
	\end{aligned}
	\label{eqn:model}
\end{equation}
That is,
\begin{itemize}
\item  $\pmb{x}$ is the latent variable to be inferred, with the prior constructed as the product of marginal forecasts of its components;
\item $u$ is a nuisance latent variable with Dirac delta density $p(u|\pmb{x})=\delta(u-f(\pmb{x}))$;
\item observation $y$ provides soft evidence about $u$.
\end{itemize}
\label{dfn:model}
\end{definition}

The objective of inference on \eqref{eqn:model} is the posterior distribution $\pmb{x}|y$.
The computation of $p(\pmb{x}|y)$ is formalized by the following proposition:
\begin{proposition}
\label{pro:post}
Under the assumptions of model~\eqref{eqn:model},
\begin{equation}
    p(\pmb{x}| y) = p(\pmb{x}) \frac {p(u=f(\pmb{x})|y)} {p(u=f(\pmb{x}))},
	\label{eqn:pro}
\end{equation}
where
\begin{align}
		\label{eqn:pro-x} p(\pmb{x}) & \triangleq \prod_{i=1}^N \theta_i(x_i) \\
		\label{eqn:pro-u-given-y} p(u|y) & \triangleq \eta(u) \\
		\label{eqn:pro-u} p(u) & \triangleq  \int_X \delta(u-f(\pmb{x}))p(\pmb{x})d\pmb{x}
\end{align}
\end{proposition}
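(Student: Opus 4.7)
The plan is to derive~\eqref{eqn:pro} through a direct application of Jeffrey's rule for soft evidence, since the reduced model of Figure~\ref{fig:reduced-model} has exactly the structure the rule requires: $\pmb{x}$ is the latent variable of interest, $u$ is deterministically determined by $\pmb{x}$ via $u=f(\pmb{x})$, and $y$ provides soft evidence about $u$ (the distribution $p(u|y)=\eta(u)$ is given rather than a likelihood $p(y|u)$).

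First I would marginalize out the nuisance variable $u$ by writing
$$p(\pmb{x}|y) = \int p(\pmb{x}|u, y)\,p(u|y)\, du.$$
The key observation is that in the reduced network, $y$ affects $\pmb{x}$ only through its soft evidence about $u$, so $p(\pmb{x}|u, y) = p(\pmb{x}|u)$. This collapses the identity above to the soft-evidence update
$$p(\pmb{x}|y) = \int p(\pmb{x}|u)\,\eta(u)\, du,$$
which is precisely Jeffrey's rule as recalled in Section~\ref{subsec:background:bayesian_statistics:soft_evidence}.

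Next I would rewrite $p(\pmb{x}|u)$ via Bayes' rule using the prior~\eqref{eqn:pro-x} and the deterministic likelihood $p(u|\pmb{x}) = \delta(u - f(\pmb{x}))$ to obtain
$$p(\pmb{x}|u) = \frac{\delta(u - f(\pmb{x}))\,p(\pmb{x})}{p(u)},$$
where the denominator is precisely~\eqref{eqn:pro-u}. Substituting this back and using the sifting property $\int \delta(u - f(\pmb{x}))\,g(u)\,du = g(f(\pmb{x}))$ to collapse the integral over $u$ yields
$$p(\pmb{x}|y) = p(\pmb{x})\,\frac{\eta(f(\pmb{x}))}{p(f(\pmb{x}))} = p(\pmb{x})\,\frac{p(u=f(\pmb{x})|y)}{p(u=f(\pmb{x}))},$$
which is the claim.

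The main subtlety that will require care is the justification of the identity $p(\pmb{x}|u, y) = p(\pmb{x}|u)$ in this soft-evidence setting, where no conditional $p(y|u)$ is specified. In the hard-evidence case this would follow directly from d-separation along the chain $y \leftarrow u \leftarrow \pmb{x}$; in the soft-evidence setting one must instead appeal to the semantics of Jeffrey's rule as formulated in Section~\ref{subsec:background:bayesian_statistics:soft_evidence}, where the update on any ancestor of the soft-evidence node depends on $y$ only through the updated marginal $p(u|y)$. A minor secondary point is that the formula implicitly presumes $p(f(\pmb{x})) > 0$ on the support of the prior, so that the ratio is well defined; this is a mild regularity condition that will be automatic in the linear Gaussian setting studied in the sequel.
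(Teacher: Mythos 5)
Your proposal is correct and follows essentially the same route as the paper: both start from the marginalization $p(\pmb{x}|y)=\int p(\pmb{x}|u)\,p(u|y)\,du$ (Jeffrey's rule for the soft evidence on $u$), apply Bayes' rule with the Dirac likelihood $p(u|\pmb{x})=\delta(u-f(\pmb{x}))$, and collapse the integral by the sifting property. The only difference is that you explicitly justify the step $p(\pmb{x}|u,y)=p(\pmb{x}|u)$ and note the regularity condition $p(f(\pmb{x}))>0$, both of which the paper leaves implicit.
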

\begin{proof}
\begin{equation}
p(\pmb{x}|y) = \int_U p(\pmb{x}|u) p(u|y) du
\label{eqn:pro-proof-1}
\end{equation}
$p(u|y)=\eta(u)$ is known, thus we only need to express $p(\pmb{x}|u)$ in known terms.
By the Bayes rule,
\begin{equation}
p(\pmb{x}|u) = p(\pmb{x}) \frac {p(u|\pmb{x})} {p(u)} = p(\pmb{x}) \frac {\delta(u-f(\pmb{x}))} {p(u)}
\label{eqn:pro-proof-2}
\end{equation}
Substituting~\eqref{eqn:pro-proof-2} into~\eqref{eqn:pro-proof-1}, we obtain
\begin{equation}
\begin{aligned}
p(\pmb{x}|y)  & = \int_U p(\pmb{x})\frac  {\delta(u-f(\pmb{x}))} {p(u)} p(u|y) du  \\
& = p(\pmb{x}) \frac {p(u=f(\pmb{x})|y)} {p(u=f(\pmb{x}))}
\end{aligned}
\end{equation}
\end{proof}

\subsection{Linear Gaussian Reconciliation}
\label{subsec:theory:model:linear-model}
In applications, such as those mentioned in Chapter~\ref{ch:problem_statement}, reconciliation is often performed with $f(\pmb{x}) = A\pmb{x}$ and Gaussian forecast distributions.
In this case, $p(\pmb{x}|y)$ can be obtained in the closed form of the density of a multivariate Gaussian distribution, as formalized by the following proposition:
\begin{proposition}
Let $f(\pmb{x}) = A\pmb{x}$.
Let the forecast distributions $\Theta$ and $\Eta$ be the Gaussian distributions, parameterized by $(\pmb{\mu}_{\theta}, \Sigma_{\theta})$, $(\pmb{\mu}_{\eta}, \Sigma_{\eta})$, i.e., 
\begin{itemize}
    \item $p(\pmb{x}) = p_\mathcal{N}(\pmb{x} | \pmb{\mu}_{\theta}, \Sigma_{\theta})$, 
    \item $p(f(\pmb{x}) | y)=p_\mathcal{N}(f(\pmb{x}) | \pmb{\mu}_{\eta}, \Sigma_{\eta})$.
\end{itemize}
Then,
\begin{equation}
p(\pmb{x}| y) \propto p_\mathcal{N}(\pmb{x} | \pmb{\mu}_{LG}, \Sigma_{LG})
\label{eqn:pro-lg}
\end{equation}
where
\begin{align}
    \label{pro:lg:sigma-lg} \Sigma_{LG} &= (\Sigma_\theta^{-1} + A^T [\Sigma_\eta^{-1} - (A \Sigma_\theta A^T)^{-1}] A)^{-1}\\
    \label{pro:lg:mu-lg} \pmb{\mu}_{LG} &= \pmb{\mu}_\theta + \Sigma_{LG} A^T \Sigma_\eta^{-1} (\pmb{\mu}_\eta - A \pmb{\mu}_\theta)
\end{align}
\label{pro:lg}
\end{proposition}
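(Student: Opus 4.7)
The plan is to reduce this to Proposition \ref{pro:post} and then perform a completion-of-squares computation, exploiting the fact that a linear image of a Gaussian is again Gaussian.

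\textbf{Step 1: Compute the three densities in \eqref{eqn:pro}.} By Proposition~\ref{pro:post} I need expressions for $p(\pmb{x})$, $p(u=A\pmb{x}|y)$, and $p(u=A\pmb{x})$. The first two are given as Gaussians in the hypotheses. For the third, I would apply the pushforward formula: since $p(\pmb{x}) = p_\mathcal{N}(\pmb{x}|\pmb{\mu}_\theta,\Sigma_\theta)$ and $u = A\pmb{x}$, the marginal $p(u)$ defined by \eqref{eqn:pro-u} is the $M$-dimensional Gaussian $p_\mathcal{N}(u | A\pmb{\mu}_\theta, A\Sigma_\theta A^T)$. (I assume $A\Sigma_\theta A^T$ is invertible; if $A$ has redundant rows this assumption fails, but the statement implicitly requires it since $\Sigma_\eta$ is of matching dimension.)

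\textbf{Step 2: Reduce to a quadratic in $\pmb{x}$.} Substituting these three Gaussians into \eqref{eqn:pro}, $\log p(\pmb{x}|y)$ is, up to an additive constant (constants in $\pmb{x}$), a quadratic form in $\pmb{x}$:
\begin{equation*}
\log p(\pmb{x}|y) = -\tfrac{1}{2}(\pmb{x}-\pmb{\mu}_\theta)^T\Sigma_\theta^{-1}(\pmb{x}-\pmb{\mu}_\theta) - \tfrac{1}{2}(A\pmb{x}-\pmb{\mu}_\eta)^T\Sigma_\eta^{-1}(A\pmb{x}-\pmb{\mu}_\eta) + \tfrac{1}{2}(A\pmb{x}-A\pmb{\mu}_\theta)^T(A\Sigma_\theta A^T)^{-1}(A\pmb{x}-A\pmb{\mu}_\theta) + \text{const}.
\end{equation*}
Hence $p(\pmb{x}|y)$ is proportional to the exponential of a quadratic form and is therefore Gaussian.

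\textbf{Step 3: Read off the precision and mean.} Collecting the coefficient of $\tfrac{1}{2}\pmb{x}^T(\cdot)\pmb{x}$ yields the precision
\begin{equation*}
\Sigma_{LG}^{-1} = \Sigma_\theta^{-1} + A^T\Sigma_\eta^{-1}A - A^T(A\Sigma_\theta A^T)^{-1}A = \Sigma_\theta^{-1} + A^T[\Sigma_\eta^{-1}-(A\Sigma_\theta A^T)^{-1}]A,
\end{equation*}
which is \eqref{pro:lg:sigma-lg}. Collecting the linear terms $-\pmb{x}^T(\cdot)$ gives
\begin{equation*}
\Sigma_{LG}^{-1}\pmb{\mu}_{LG} = \Sigma_\theta^{-1}\pmb{\mu}_\theta + A^T\Sigma_\eta^{-1}\pmb{\mu}_\eta - A^T(A\Sigma_\theta A^T)^{-1}A\pmb{\mu}_\theta.
\end{equation*}
The algebraic trick is to add and subtract $A^T\Sigma_\eta^{-1}A\pmb{\mu}_\theta$ on the right-hand side, which regroups the expression as $\Sigma_{LG}^{-1}\pmb{\mu}_\theta + A^T\Sigma_\eta^{-1}(\pmb{\mu}_\eta - A\pmb{\mu}_\theta)$. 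Premultiplying by $\Sigma_{LG}$ then yields \eqref{pro:lg:mu-lg}.

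\textbf{Main obstacle.} The derivation itself is routine Gaussian bookkeeping; the only conceptual subtlety is justifying the crucial \emph{subtraction} of $A^T(A\Sigma_\theta A^T)^{-1}A$ from the precision, which comes from the $p(u)$ in the denominator of \eqref{eqn:pro}. It is tempting (and incorrect) to treat $p(\pmb{x}|y)$ as merely the product of prior and soft evidence $\eta(A\pmb{x})$; the denominator $p(u)$ is what prevents double-counting the prior's contribution along the row space of $A$. I would flag this explicitly so that the sign of the $(A\Sigma_\theta A^T)^{-1}$ term is transparent, and note the implicit nondegeneracy assumption on $A\Sigma_\theta A^T$.
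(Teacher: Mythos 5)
Your proposal is correct and follows essentially the same route as the paper's proof: obtain $p(u)=p_\mathcal{N}(u\mid A\pmb{\mu}_\theta, A\Sigma_\theta A^T)$ as the linear image of the Gaussian prior, substitute into \eqref{eqn:pro}, and complete the square, identifying the precision as the quadratic coefficient and recovering $\pmb{\mu}_{LG}$ via the same add-and-subtract of $A^T\Sigma_\eta^{-1}A\pmb{\mu}_\theta$ used in the paper. Your explicit flagging of the nondegeneracy assumption on $A\Sigma_\theta A^T$ and of the role of the denominator $p(u)$ (and the correct $-\tfrac{1}{2}$ factors in the exponents, which the paper's displayed intermediate steps drop) are welcome refinements but do not change the argument.
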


\begin{proof}
$f(\pmb{x}) = A\pmb{x}$ is an affine transformation on the multivariate Gaussian distributed parameter $\pmb{x}$, which implies:
\begin{equation}
    p(f(\pmb{x}) | \pmb{\theta}) = p_\mathcal{N}(f(\pmb{x}) | A\pmb{\mu}_{\theta}, A\Sigma_{\theta}A^T)
    \label{eqn:pro-lg-proof:affine}
\end{equation}
Using~\eqref{eqn:pro} with~\eqref{eqn:pro-lg-proof:affine} and the distributions in Proposition~\ref{pro:lg}:
\begin{align}
    \label{eqn:pro-lg-proof:1} p(\pmb{x}|y) &= p_\mathcal{N}(\pmb{x} | \pmb{\mu}_{\theta}, \Sigma_{\theta}) \frac{p_\mathcal{N}(f(\pmb{x}) | \pmb{\mu}_{\eta}, \Sigma_{\eta})}{p_\mathcal{N}(f(\pmb{x}) | A\pmb{\mu}_{\theta}, A\Sigma_{\theta}A^T)}\\
    \label{eqn:pro-lg-proof:2} &\propto \exp \left[(\pmb{x} - \pmb{\mu}_\theta)^T \Sigma_\theta^{-1} (\pmb{x} - \pmb{\mu}_\theta)\right] \frac{\exp \left[(A\pmb{x} - \pmb{\mu}_\eta)^T \Sigma_\eta^{-1} (A\pmb{x} - \pmb{\mu}_\eta)\right]}{\exp \left[(A\pmb{x} - A \pmb{\mu}_\theta)^T (A \Sigma_\theta A^T)^{-1} (A\pmb{x} - A \pmb{\mu}_\theta) \right]}\\
    \label{eqn:pro-lg-proof:3-short-exp} &= \exp (g(\pmb{x}))
\end{align}
Here $g(\pmb{x})$ represents the grouping of all the terms in the exponents of~\eqref{eqn:pro-lg-proof:2}.\\
Expanding the terms in the exponent of~\eqref{eqn:pro-lg-proof:2}:
\begin{equation}
    \begin{aligned}
    g(\pmb{x}) = &\pmb{x}^T \Sigma_\theta^{-1} \pmb{x} +\pmb{x}^T A^T \Sigma_\eta^{-1} A \pmb{x} -\pmb{x}^T A^T (A \Sigma_\theta A^T)^{-1} A \pmb{x}\\
    &-2 \pmb{\mu}_\theta^T \Sigma_\theta^{-1} \pmb{x} -2 \pmb{\mu}_\eta^T \Sigma_\eta^{-1} A \pmb{x} +2 \pmb{\mu}_\theta^T A^T (A \Sigma_\theta A^T)^{-1} A \pmb{x}\\
    &+\pmb{\mu}_\theta^T \Sigma_\theta^{-1} \pmb{\mu}_\theta + \pmb{\mu}_\eta^T \Sigma_\eta^{-1} \pmb{\mu}_\eta -\pmb{\mu}_\theta^T A^T (A \Sigma_\theta A^T)^{-1} A \pmb{\mu}_\theta
    \end{aligned}
    \label{eqn:pro-lg-proof:exp1}
\end{equation}
Let us denote the terms in the third line of~\ref{eqn:pro-lg-proof:exp1} by $C$ defined as follows:
\begin{equation*}
    C = \pmb{\mu}_\theta^T \Sigma_\theta^{-1} \pmb{\mu}_\theta + \pmb{\mu}_\eta^T \Sigma_\eta^{-1} \pmb{\mu}_\eta -\pmb{\mu}_\theta^T A^T (A \Sigma_\theta A^T)^{-1} A \pmb{\mu}_\theta
\end{equation*}
Here $C$ does not depend on $\pmb{x}$ and only affects the normalization constant of the density of Equation~\eqref{eqn:pro-lg}. Replacing $C$ in~\ref{eqn:pro-lg-proof:exp1}:
\begin{equation}
    \begin{aligned}
    g(\pmb{x}) = &\pmb{x}^T \Sigma_\theta^{-1} \pmb{x} +\pmb{x}^T A^T \Sigma_\eta^{-1} A \pmb{x} -\pmb{x}^T A^T (A \Sigma_\theta A^T)^{-1} A \pmb{x}\\
    &-2 \pmb{\mu}_\theta^T \Sigma_\theta^{-1} \pmb{x} -2 \pmb{\mu}_\eta^T \Sigma_\eta^{-1} A \pmb{x} +2 \pmb{\mu}_\theta^T A^T (A \Sigma_\theta A^T)^{-1} A \pmb{x}\\
    &+C
    \end{aligned}
    \label{eqn:pro-lg-proof:exp2}
\end{equation}

Grouping the terms in~\eqref{eqn:pro-lg-proof:exp2}:
\begin{equation}
    \begin{aligned}
    g(\pmb{x}) = &\pmb{x}^T (\Sigma_\theta^{-1} + A^T \Sigma_\eta^{-1} A - A^T (A \Sigma_\theta A^T)^{-1} A) \pmb{x}\\
    &-2 (\pmb{\mu}_\theta^T \Sigma_\theta^{-1} + \pmb{\mu}_\eta^T \Sigma_\eta^{-1} A - \pmb{\mu}_\theta^T A^T (A \Sigma_\theta A^T)^{-1} A) \pmb{x}\\
    &+C
    \end{aligned}
    \label{eqn:pro-lg-proof:exp3}
\end{equation}
Adding and subtracting the term $(\pmb{\mu}_\theta^T A^T \Sigma_\eta^{-1} A)$ from the parentheses in the second line of~\eqref{eqn:pro-lg-proof:exp3}:
\begin{equation}
    \begin{aligned}
    g(\pmb{x}) = &\pmb{x}^T (\Sigma_\theta^{-1} + A^T \Sigma_\eta^{-1} A - A^T (A \Sigma_\theta A^T)^{-1} A) \pmb{x}\\
    &-2 (\pmb{\mu}_\theta^T \Sigma_\theta^{-1} + \pmb{\mu}_\theta^T A^T \Sigma_\eta^{-1} A - \pmb{\mu}_\theta^T A^T (A \Sigma_\theta A^T)^{-1} A + \pmb{\mu}_\eta^T \Sigma_\eta^{-1} A - \pmb{\mu}_\theta^T A^T \Sigma_\eta^{-1} A) \pmb{x}\\
    &+C
    \end{aligned}
    \label{eqn:pro-lg-proof:exp4}
\end{equation}
Grouping terms:
\begin{equation}
    \begin{aligned}
    g(\pmb{x}) = &\pmb{x}^T (\Sigma_\theta^{-1} + A^T \Sigma_\eta^{-1} A - A^T (A \Sigma_\theta A^T)^{-1} A) \pmb{x}\\
    &-2 (\pmb{\mu}_\theta^T [\Sigma_\theta^{-1} + A^T \Sigma_\eta^{-1} A - A^T (A \Sigma_\theta A^T)^{-1} A] + \pmb{\mu}_\eta^T \Sigma_\eta^{-1} A - \pmb{\mu}_\theta^T A^T \Sigma_\eta^{-1} A) \pmb{x}\\
    &+C
    \end{aligned}
    \label{eqn:pro-lg-proof:exp5}
\end{equation}
Introducing the term $\Sigma_{LG}$ as defined in Equation~\eqref{pro:lg:sigma-lg} and replacing in~\eqref{eqn:pro-lg-proof:exp5}:
\begin{equation}
    \begin{aligned}
    g(\pmb{x}) = &\pmb{x}^T \Sigma_{LG}^{-1} \pmb{x}\\
    &-2 (\pmb{\mu}_\theta^T \Sigma_{LG}^{-1} + \pmb{\mu}_\eta^T \Sigma_\eta^{-1} A  \Sigma_{LG} \Sigma_{LG}^{-1} - \pmb{\mu}_\theta^T A^T \Sigma_\eta^{-1} A  \Sigma_{LG} \Sigma_{LG}^{-1}) \pmb{x}\\
    &+ C
    \end{aligned}
    \label{eqn:pro-lg-proof:exp6}
\end{equation}
Grouping terms:
\begin{equation}
    \begin{aligned}
    g(\pmb{x}) = &\pmb{x}^T \Sigma_{LG}^{-1} \pmb{x}\\
    &-2 (\pmb{\mu}_\theta + \Sigma_{LG} A^T \Sigma_\eta^{-1} \pmb{\mu}_\eta - \Sigma_{LG} A^T \Sigma_\eta^{-1} A \pmb{\mu}_\theta)^T \Sigma_{LG}^{-1} \pmb{x}\\
    &+ C
    \end{aligned}
    \label{eqn:pro-lg-proof:exp7}
\end{equation}
Introducing the term $\pmb{\mu}_{LG}$ as defined in Equation~\eqref{pro:lg:mu-lg} and substituting it in~\eqref{eqn:pro-lg-proof:exp7}:
\begin{equation}
g(\pmb{x}) = \pmb{x}^T \Sigma_{LG}^{-1} \pmb{x} -2 \pmb{\mu}_{LG}^T \Sigma_{LG}^{-1} \pmb{x} + C
\label{eqn:pro-lg-proof:exp8}
\end{equation}
Plugging Equation~\eqref{eqn:pro-lg-proof:exp8} in Equation~\eqref{eqn:pro-lg-proof:2}:
\begin{align}
    p(\pmb{x}|y) &\propto \exp \Big( \pmb{x}^T \Sigma_{LG}^{-1} \pmb{x} -2 \pmb{\mu}_{LG}^T \Sigma_{LG}^{-1} \pmb{x} + C \Big)\\
    &\propto \exp \Big( (\pmb{x} - \pmb{\mu}_{LG})^T \Sigma_{LG}^{-1} (\pmb{x} - \pmb{\mu}_{LG}) \Big)\\
    &\propto p_\mathcal{N}(\pmb{x} | \pmb{\mu}_{LG}, \Sigma_{LG})
\end{align}
Which completes the proof.
\end{proof}

\chapter{Empirical Evaluation}
\label{ch:empirical_evaluation}

In this chapter we evaluate the \textit{Linear Gaussian} model described in Section~\ref{subsec:theory:model:linear-model}.

In each evaluation (experiment) we compare two reconciliation methods --- the first method is the one presented in this thesis, Section~\ref{subsec:theory:model:linear-model}, which we denote by \textit{ET Reconciler}, the second is the \textit{LG} method from \cite{hier-bayes-paper} which we denote by \textit{Bayes Paper LG Reconciler}. For each reconciliation method tested we produce a reconciled probability $\tilde{\mathcal{D}}_t$ for each time step $t$.

We perform those experiments on two datasets --- synthetic and real data set, where these evaluations are described in Section~\ref{sec:experiments:synthetic} and~\ref{sec:experiments:real} respectively.
Generally, we  fit a GP to each series to obtain the base forecast, then we apply the reconciliation method to try to improve these forecasts, and compute the metric described in Section~\ref{sec:experiments:metrics} before and after the reconciliation.
On the synthetic data set we  performed two  evaluations (or experiments) that differ in the way the base forecasts are generated --- this will be described in Section~\ref{sec:experiments:synthetic}.

\section{Metrics}
\label{sec:experiments:metrics}

For each reconciliation method we compute the \textit{Negative Log Predictive Density} (NLPD):
\begin{equation}
    \text{NLPD} = \frac{1}{T} \sum_t -\log{p_{\tilde{\mathcal{D}}_t}(\pmb{x}_t)}
\end{equation}
i.e., the average, over the time steps, of the negative log of the true values $\pmb{x}$ under the reconciled probability density function $\tilde{\mathcal{D}}$.
For each data set we perform $N=50$ simulations and compute the \textit{NLPD}. We report the mean, the standard error of the mean (denoted as SEM) and the standard deviation (denoted as SD) of the \textit{NLPD}s of these $N$ simulations for each reconciliation method (and for the base forecasts as a benchmark).

\section{Synthetic Data Set}
\label{sec:experiments:synthetic}

\subsection{Data --- Experiment A}
\label{subsec:experiments:synthetic:data-a}

\begin{figure}	
	\centering
	\resizebox{0.5\linewidth}{!}{\begin{tikzpicture}
    \node[latent] (u) {$u$};
    \node[latent,left=of u,yshift=-1cm] (x_2) {$x^{(2)}$};
    \node[latent,left=of x_2] (x_1) {$x^{(1)}$};
    \node[latent,right=of u,yshift=-1cm] (x_3) {$x^{(3)}$};
    \node[latent,right=of x_3] (x_4) {$x^{(4)}$};
    \edge {x_1} {u};
    \edge {x_2} {u};
    \edge {x_3} {u};
    \edge {x_4} {u};
\end{tikzpicture}}
	\caption{The synthetic data set for Experiment~\ref{subsec:experiments:synthetic:data-a}. The four systems $\{x^{(n)}\}_{n=1}^4$ are connected to the same parent $u$.}
	\label{fig:experiments:synthetic:data_a}
\end{figure}
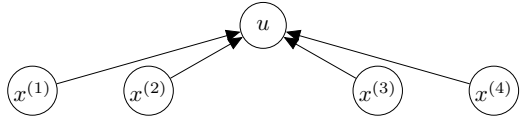
The first experiment includes a synthetic dataset consisting of a hierarchy of 4 series all connected to same parent, as shown in Figure~\ref{fig:experiments:synthetic:data_a}.
Each series is a sine wave, corrupted by noise. The observations are the first period of the sine wave and the task is to predict the second cycle.

Observations for each series are denoted by $y^{(i)}_t$, and the future (hidden) states by $x^{(i)}_t$ for the bottom series and $u_t$ for the sum of the series. A bold symbol, $\pmb{x}^{(n)} \in \mathbb{R}^T$, is a vector of all time steps, and $\pmb{x}_t \in \mathbb{R}^N$, is a vector of all series in time step $t$.

Data was created: 
\begin{align}
    y_t^{(n)} &= \sin (t) + \epsilon_t^{(n)}, \quad t \in [0, 2\pi), \quad \epsilon_t^{(n)} \sim \mathcal{N}(0, \sigma_{\epsilon}) \quad \forall n=\{1,2,3,4\}\\
    x_t^{(n)} &= \sin (t) + \epsilon_t^{(n)}, \quad t \in [2\pi, 4\pi], \quad \epsilon_t^{(n)} \sim \mathcal{N}(0, \sigma_{\epsilon}) \quad \forall n=\{1,2,3,4\}
\end{align}
In the different experiments on this data set we experimented with the following noise levels: $\sigma_{\epsilon} \in \{0, 0.1, 0.2, 0.5, 1, 2, 5\}$.

The sum of series is given by:
\begin{equation}
    u_t = f(\pmb{x}_t) = \sum_{i=1}^4 x^{(i)}_t
\end{equation}
\textbf{Base forecasts} were created by fitting a GP (\ref{subsec:background:forecasting:gpr}) to the observations $\pmb{y}^{(n)}$ and are parameterized by $\{\Theta^{(n)}\}_{n=1}^4$ for each series $\{\pmb{x}^{(n)}\}_{n=1}^4$ and $\Eta$ for $\pmb{u}$:
\begin{equation}
\label{eqn:experiments:data_sets:synthetic_a:base_forecasts}
    \hat{\pmb{x}}^{(n)} \equiv \pmb{x}^{(n)} | \Theta^{(n)} \sim \mathcal{GP}^{(n)} \quad \forall n=\{1,2,3,4\}, \quad \pmb{u} | \Eta \sim \mathcal{GP}
\end{equation}
The GP's are set with a scaled periodic kernel (see~\ref{eqn:background:gpr:periodic_kernel}). The period length hyperparameter $p$ is assumed to be known and fixed at $2\pi$, which is a common case in many applications where the seasonality is known.

\textbf{Reconciliation} for each time step is performed independently, so the reconciler actually gets the marginalised prediction for time step $t$:
$$\hat{x}_t^{(n)} \equiv x_t^{(n)} | \Theta_t^{(n)} \sim \mathcal{N}_t^{(n)} \quad \forall n=\{1,2,3,4\}, \quad u_t | \Eta_t \sim \mathcal{N}_t$$
A visualization of the data set and the base forecasts can be seen in Figure~\ref{fig:experiments:synthetic:data_a:viz}.
\begin{figure}
    \includegraphics[scale=0.5, width=14cm]{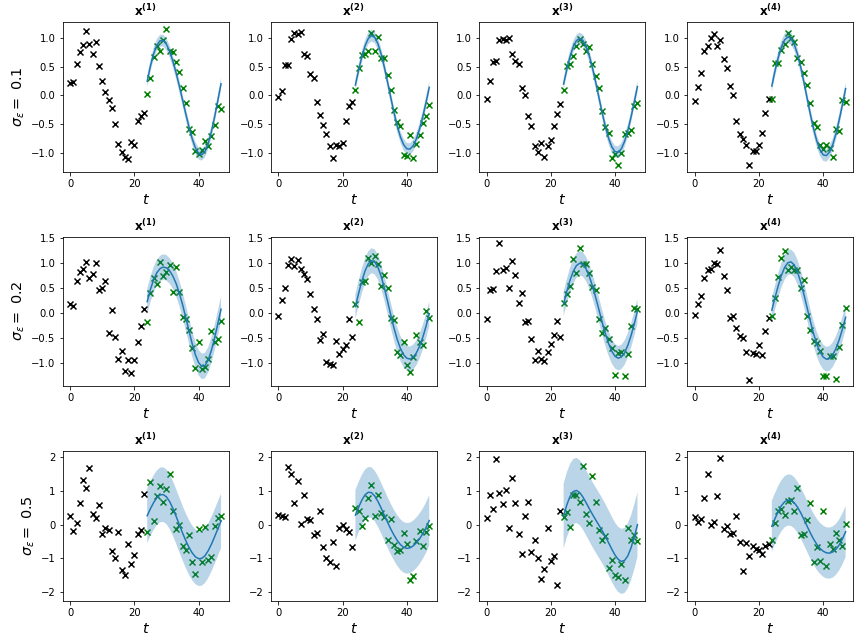}
	\centering
	\caption{Example of a data set and its base forecasts for $\sigma_{\epsilon}=\{0.1,0.2,0.5\}$. Black x's denote training data (observations $\pmb{y}^{(n)}$), green x's denote test data ($\pmb{x}^{(n)}$), blue lines and shaded blue areas denote the mean and 95\% confidence interval of the base forecasts, respectively.}
	\label{fig:experiments:synthetic:data_a:viz}
\end{figure}

\subsection{Results - Experiment A}
\label{subsec:experiments:synthetic:results-a}

\begin{table}[]
    \centering
    \begin{tabular}{|ll|r|r|r|}
    \hline
    {} & {} & {Base Forecast} & {Bayes Paper LG Reconciler} & {ET Reconciler} \\
    \hline
    \multirow[c]{3}{*}{$\sigma_{\epsilon}=0$} & \textit{mean} & -15.73 & \textbf{-15.85} & -15.04 \\
    & \textit{SEM} & 0.00 & 0.00 & 0.00 \\
    & \textit{SD} & 0.00 & 0.00 & 0.00 \\
    \hline
    \multirow[c]{3}{*}{$\sigma_{\epsilon}=0.1$} & \textit{mean} & 2.13 & 2.88 & \textbf{0.26} \\
    & \textit{SEM} & 0.52 & 0.59 & 0.35 \\
    & \textit{SD} & 3.67 & 4.18 & 2.48 \\
    \hline
    \multirow[c]{3}{*}{$\sigma_{\epsilon}=0.2$} & \textit{mean} & 3.77 & 4.37 & \textbf{2.43} \\
    & \textit{SEM} & 0.47 & 0.53 & 0.33 \\
    & \textit{SD} & 3.35 & 3.78 & 2.31 \\
    \hline
    \multirow[c]{3}{*}{$\sigma_{\epsilon}=0.5$} & \textit{mean} & 6.61 & 7.17 & \textbf{5.73} \\
    & \textit{SEM} & 0.37 & 0.42 & 0.27 \\
    & \textit{SD} & 2.59 & 2.95 & 1.91 \\
    \hline
    \multirow[c]{3}{*}{$\sigma_{\epsilon}=1$} & \textit{mean} & 44.38 & 48.55 & \textbf{39.92} \\
    & \textit{SEM} & 21.82 & 24.00 & 19.32 \\
    & \textit{SD} & 154.31 & 169.71 & 136.61 \\
    \hline
    \multirow[c]{3}{*}{$\sigma_{\epsilon}=2$} & \textit{mean} & 139.82 & 163.48 & \textbf{134.89} \\
    & \textit{SEM} & 109.48 & 127.55 & 106.00 \\
    & \textit{SD} & 774.13 & 901.95 & 749.51 \\
    \hline
    \multirow[c]{3}{*}{$\sigma_{\epsilon}=5$} & \textit{mean} & 29.84 & 34.14 & \textbf{29.74} \\
    & \textit{SEM} & 9.64 & 11.68 & 9.43 \\
    & \textit{SD} & 68.16 & 82.60 & 66.71 \\
    \hline
    \end{tabular}
\caption{Results for the experiment on Data Set~\ref{subsec:experiments:synthetic:data-a}.}
\label{tab:experiments:synthetic:results-a}
\end{table}
In Table~\ref{tab:experiments:synthetic:results-a} are the results for Experiment A.

The first takeaway is that the algorithm proposed in~\ref{pro:lg} works, i.e., the \textit{ET Reconciler} improves the \textit{NLPD} score compared to the base forecasts (except for the case where $\sigma_{\epsilon} = 0$ which we will explain in a bit).

Another takeaway from looking at the results is that we can notice that except for when the data was generated with $\sigma_{\epsilon} = 0$, \textit{ET Reconciler} always outperformed \textit{Bayes Paper LG Reconciler}. Here is some intuition why.

The base forecasts were generated using GPs (see~\ref{eqn:experiments:data_sets:synthetic_a:base_forecasts}), and the kernels of a GP have an additive property, i.e., the sum of the four periodic kernels of the bottom time series is also a periodic kernel. So, we expect the mean of the base forecasts for the upper time series to be close to the sum of the means of the base forecasts for the bottom time series, and in the notations of~\ref{subsec:theory:model:linear-model}, we expect  $A \pmb{\mu}_{\theta}$ and $\pmb{\mu}_{\eta}$ to be close.
In both reconciliation methods (\textit{ET Reconciler} and \textit{Bayes Paper LG Reconciler}) there is a mechanism (see $\mu_{LG}$ in Proposition~\ref{pro:lg:mu-lg} and Section 3.1 in~\cite{hier-bayes-paper}) to correct the means of the base forecasts if there is a difference between the mean of the upper series and the means of the bottom series after being passed through the mapping $f(\pmb{x})$, in this data set it is a summation.
Since we used GPs to generate the forecasts there is not a big difference between the means and both reconciliation methods report a similar mean.

So the difference between the reconcilers must come from the covariance.

What happens is that \textit{ET Reconciler} adds to the covariance of the base predictions the difference between the covariance of the prediction for the upper series and the transformed (after the linear mapping) covariance of the prediction for the bottom series (see $\Sigma_{LG}$ in Proposition~\ref{pro:lg:sigma-lg}), wheres in \textit{Bayes Paper LG Reconciler} they always reduce the covariance of the reconciled predictions (see Section 3.1 in~\cite{hier-bayes-paper}).
This property of the reconcilers makes \textit{ET Reconciler}  enlarge its total variance, whereas \textit{Bayes Paper LG Reconciler} reduces its total variance, and as the data set is getting more "noisy", i.e., $\sigma_{\epsilon}$ gets larger (up to a certain level) the reconciler with the larger total variance gets a better \textit{NLPD} score.

As for the experiment where the data set has no noise ($\sigma_{\epsilon} = 0$), what happens is that the base forecasts are quite accurate to begin with and their mean  is accurate, so in order to make the \textit{NLPD} score better one needs to reduce the variance, and that is why this is the only case that \textit{Bayes Paper LG Reconciler} gets a better score. This is of course a very extreme case where the base forecasts are accurate and there is no noise in the data set. Also, the magnitude of improvement of the \textit{NLPD} is very small compared to other settings of $\sigma_{\epsilon}$.

\subsection{Data --- Experiment B}
\label{subsec:experiments:synthetic:data-b}

This setting is the exact same setting as in \ref{subsec:experiments:synthetic:data-a}, with the only difference that the base forecasts themselves, for the bottom level only, were "shifted" for each series by some random noise. So instead of Equation~\ref{eqn:experiments:data_sets:synthetic_a:base_forecasts} we now have:
\begin{equation}
\label{eqn:experiments:data_sets:synthetic_b:base_forecasts}
    \hat{\pmb{x}}^{(n)}_{B} = \hat{\pmb{x}}^{(n)}_{A} + \zeta^{(n)}, \quad \zeta^{(n)} \sim \mathcal{N}(0, 1) \quad \forall n=\{1,2,3,4\}, \quad \pmb{u} | \Eta \sim \mathcal{GP}
\end{equation}
where $\hat{\pmb{x}}^{(n)}_{A}, \hat{\pmb{x}}^{(n)}_{B}$ are the base forecasts in Experiments A and B respectively.
\textbf{Reconciliation} is done the same as in \ref{subsec:experiments:synthetic:data-a}.

A visualization of the data set and the "shifted" base forecasts can be seen in Figure~\ref{fig:experiments:synthetic:data_b:viz}.
\begin{figure}
    \includegraphics[scale=0.5, width=14cm]{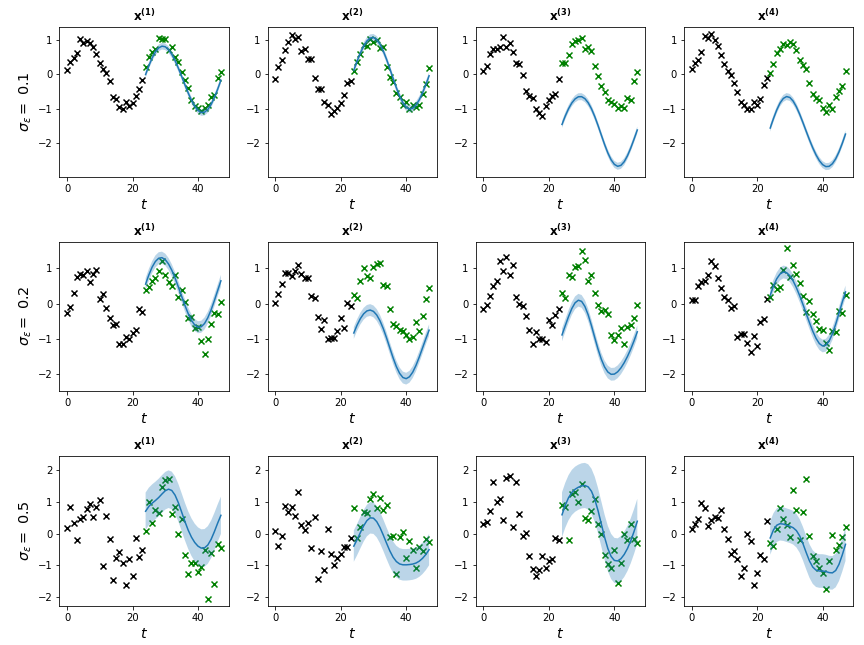}
	\centering
	\caption{Example of a data set and its "shifted" base forecasts for $\sigma_{\epsilon}=\{0.1,0.2,0.5\}$. Black x's denote training data (observations $\pmb{y}^{(n)}$), green x's denote test data ($\pmb{x}^{(n)}$), blue lines and shaded blue areas denote the mean and 95\% confidence interval of the base forecasts, respectively.}
	\label{fig:experiments:synthetic:data_b:viz}
\end{figure}

\subsection{Results - Experiment B}
\label{subsec:experiments:synthetic:results-b}

\begin{table}[]
    \centering
    \begin{tabular}{|ll|r|r|r|}
    \hline
    {} & {} & {Base Forecast} & {Bayes Paper LG Reconciler} & {ET Reconciler} \\
    \hline
    \multirow[c]{3}{*}{$\sigma_{\epsilon}=0$} & \textit{mean} & 15947.89 & 15191.35 & \textbf{12166.44} \\
    & \textit{SEM} & 1727.36 & 1675.84 & 1549.79 \\
    & \textit{SD} & 12214.29 & 11849.97 & 10958.69 \\
    \hline
    \multirow[c]{3}{*}{$\sigma_{\epsilon}=0.1$} & \textit{mean} & 367.63 & 348.50 & \textbf{265.19} \\
    & \textit{SEM} & 46.69 & 44.55 & 38.77 \\
    & \textit{SD} & 330.15 & 315.02 & 274.13 \\
    \hline
    \multirow[c]{3}{*}{$\sigma_{\epsilon}=0.2$} & \textit{mean} & 109.28 & 103.90 & \textbf{79.01} \\
    & \textit{SEM} & 10.64 & 9.83 & 7.75 \\
    & \textit{SD} & 75.21 & 69.51 & 54.83 \\
    \hline
    \multirow[c]{3}{*}{$\sigma_{\epsilon}=0.5$} & \textit{mean} & 20.82 & 20.59 & \textbf{16.75} \\
    & \textit{SEM} & 1.86 & 1.80 & 1.60 \\
    & \textit{SD} & 13.15 & 12.70 & 11.30 \\
    \hline
    \multirow[c]{3}{*}{$\sigma_{\epsilon}=1$} & \textit{mean} & 52.74 & 58.21 & \textbf{43.30} \\
    & \textit{SEM} & 23.05 & 27.23 & 18.19 \\
    & \textit{SD} & 163.00 & 192.53 & 128.64 \\
    \hline
    \multirow[c]{3}{*}{$\sigma_{\epsilon}=2$} & \textit{mean} & 164.91 & 177.92 & \textbf{153.76} \\
    & \textit{SEM} & 102.91 & 112.67 & 98.58 \\
    & \textit{SD} & 727.67 & 796.71 & 697.07 \\
    \hline
    \multirow[c]{3}{*}{$\sigma_{\epsilon}=5$} & \textit{mean} & 107.73 & 119.53 & \textbf{106.10} \\
    & \textit{SEM} & 41.73 & 47.53 & 42.00 \\
    & \textit{SD} & 295.09 & 336.06 & 296.97 \\
    \hline
    \end{tabular}
\caption{Results for the experiment on Data Set~\ref{subsec:experiments:synthetic:data-b}.}
\label{tab:experiments:synthetic:results-b}
\end{table}

In Table~\ref{tab:experiments:synthetic:results-b} are the results for Experiment B.

Here we can notice that again for all settings of $\sigma_{\epsilon}$ \textit{ET Reconciler} outperforms \textit{Bayes Paper LG Reconciler},  even when $\sigma_{\epsilon} = 0$. This happens here because for all settings the reconcilers must change the mean of the base forecasts, as well as the covariance, so the fact that for $\sigma_{\epsilon} = 0$ there is no noise does not affect the improvement of the \textit{NLPD} compared to the base forecasts.
As in the previous experimental results (Table~\ref{tab:experiments:synthetic:results-a}), the \textit{NLPD} improves up to a certain level of noise in the data. Also  notice that here even for a very large noise ($\sigma_{\epsilon} = 5$), \textit{ET Reconciler} improves the \textit{NLPD} whereas \textit{Bayes Paper LG Reconciler} does not.

\section{Real Data Set}
\label{sec:experiments:real}

\subsection{Data}
\label{subsec:experiments:real:data}
\begin{figure}
	\centering
    \includegraphics[scale=0.5, width=14cm]{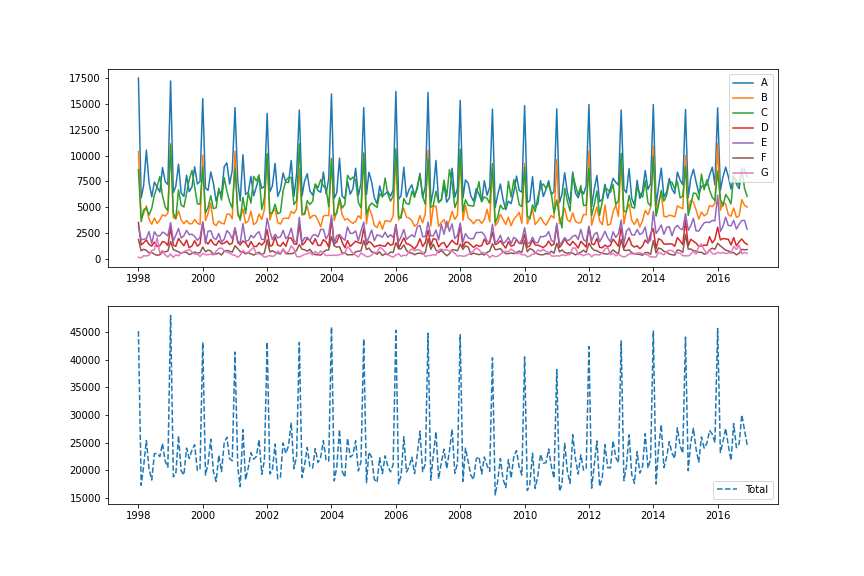}
	\caption{The first level of the hierarchy of the dataset. Top graph: the number of visitors by month to each of the seven states of Australia. Bottom graph: The sum of the series in the top graph.}
	\label{fig:experiments:real:data:viz}
\end{figure}
The real data set we tested on is a data set consisting of the number of tourists who entered Australia on a monthly basis, by their reason of entrance (Holiday, Visitor, Business, Other), between 1998 to 2016. On the bottom level the data set contains 304 time series referring to Australia's 76 regions and 4 reasons of entrance. The data is then aggregated into 27 macro zones and then into 7 states on the upper level. We chose to test the 7 series of the total tourists entering  each state. A visualization of the data set is shown in Figure~\ref{fig:experiments:real:data:viz}.

\textbf{Base forecasts} were created using the same methodology of the synthetic data set (see~\ref{subsec:experiments:synthetic:data-a}), with the only difference that the period length hyperparameter $p$ was set to $12$ and that we added an RBF kernel (see~\ref{eqn:background:gpr:rbf_kernel}).

\textbf{Reconciliation} here is also performed independently as in \ref{subsec:experiments:synthetic:data-a}, and got the same marginalised prediction for time step $t$.
The main change here regarding train and test data, is that we are splitting the data set into training and test at a different point in each simulation we perform, i.e., we perform 50 simulations, each with 60 time steps (5 years) as training data and 24 time steps (2 years) as test data, starting from the first period.

\subsection{Results}
\label{subsec:experiments:real:results}

\begin{table}[]
    \centering
    \begin{tabular}{|l|r|r|r|}
    \hline
    {} & {\textit{mean}} & {\textit{SEM}} & 
    {\textit{SD}} \\
    \hline
    Base Forecast & 60.56 & 0.41 & 2.88 \\
    \hline
    Bayes Paper LG Reconciler & 61.19 & 0.41 & 2.87 \\
    \hline
    ET Reconciler & \textbf{59.35} & 0.42 & 2.99 \\
    \hline
    \end{tabular}
\caption{Results for the experiment on Data Set of Section~\ref{subsec:experiments:real:data}.}
\label{tab:experiments:real:results}
\end{table}

 Table~\ref{tab:experiments:real:results} depicts the results for the experiment on the real data set.
Here we can see that our proposed method works for the real data set as well, i.e., the NLPD of the reconciled forecasts decreases with respect to the base forecast's NLPD.

In this dataset, the observations for the bottom series are quite noisy, while for the upper series they are less noisy (see Figure~\ref{fig:experiments:real:data:viz}).
The base forecasts were generated with a GP with periodic and RBF kernels, which generated quite good base forecasts in the mean sense, i.e., they were quite accurate in their mean but not in their variance.
Due to the additive property of the GPs, the mean of the sum of the base forecasts for the bottom level was close to the mean of the upper level series, but the base forecasts for the bottom level were too ``certain'', i.e., had relatively small variance, that had to be taken care of.
\textit{Bayes Paper LG Reconciler}, as mentioned in Section~\ref{subsec:experiments:synthetic:results-a}, always decreased the total variance of the reconciled forecasts, while our \textit{ET Reconciler} did not.
So, \textit{ET Reconciler} made the reconciled forecasts less "certain", while \textit{Bayes Paper LG Reconciler} made the base forecasts more "certain", i.e., decreasing and increasing total variance respectively.
Our metric is a probabilistic metric, i.e., it takes into account the density of the prediction, and that property along with the fact that the base forecasts were too ``certain'' but quite accurate in the mean sense, made \textit{ET Reconciler} improve the NLPD and \textit{Bayes Paper LG Reconciler} worsen the NLPD, compared to that of the base forecasts.

\chapter{Conclusion}
\label{ch:discussion}

Time series forecasting is a key task in the ability to make informed decisions under uncertainty.
There are many cases where the number of series we need to predict is too large to fit in a single model.
In that case, a distributed approach combined with a way to utilize the knowledge of the hierarchical structure of these series, allows to get better predictions in a reasonable runtime.
The process is to learn base forecasts for each series independently and for some summary statistics series based on the hierarchy structure, and pass those base forecasts through a reconciliation algorithm to improve each forecast.
In this work we tackled the reconciliation problem, with the assumption that the base forecasts are given to us.

Reconciliation is an active field of research, but it is worth noting that works in this field, including the ones cited in Section \ref{sec:related_work:forecast_reconciliation}, are trying to solve a somewhat different problem --- assuring coherency of the base forecasts, i.e., that the summary statistics series forecasts are aligned with both the forecasts for each series and the hierarchical structure.
As stated before, we tried to solve a somewhat different problem of reconciliation in a distributed setting.

In this work we presented a method for Bayesian forecast reconciliation of hierarchical time series.
We defined the reconciliation problem as a generative model, and derived the posterior distribution of the time series of interest given the historical observations that is given to us through the individual forecasts of the series and the forecast for the summary statistics.
We highlighted the assumptions in our case about the accuracy of the base forecasts for the summary statistics, and about the prior for the base forecasts for the series of interest.
We finally addressed the specific case of linear Gaussian reconciliation --- a case where the base forecasts are Gaussian distributed and the summary statistics have a linear mapping.
We showed, by deriving the formulas, that reconciliation for this case can be achieved in closed form of a multivariate Gaussian distribution.

A limitation of the proposed method is that we assume that although it is intractable to forecast the series in a single model where the number of series is very large it is possible to forecast their summary statistics in a single model.
This is an assumption that we have to make for the validity of the proposed method.

We evaluated our linear Gaussian reconciliation method on both a synthetic data set and a real data set.
We saw that our method improved the base forecasts on the metric described in Section~\ref{sec:experiments:metrics}.
We then compared our method to the one proposed in~\cite{hier-bayes-paper} on both data sets by the same metric.
The results showed that our method outperformed theirs on both the synthetic data set and the real data set.
This difference might come from the fact that the data was noisy and the adjustments that had to be made where in the variance of the base forecasts and not on the mean of the forecasts, and the method proposed in~\cite{hier-bayes-paper} does not do well in that case.

Our experiments were done on data sets that have a small number of series while our proposed method was set to work on a setting with a very large number of series, so, future work should definitely include testing the methods proposed in this work on a large scale data set.
Another direction is reconciliation through time --- we tested our method on reconciliation of each time step independently, and future work could try to test this as well.

\begin{appendices}

\end{appendices}

\bibliography{main.bbl}

\clearpage

\end{document}